% !TEX TS-program = pdflatex
% !TEX encoding = UTF-8 Unicode

%%%% AAAI-12 paper
%%%% Title: The Complexity of Planning Revisited -- 
%%%% A Parameterized Analysis
%%%% Authors: Christer Backstrom, Yue Chen, Peter Jonsson, 
%%%% Sebastian Ordyniak and Stefan Szeider

\documentclass[10pt,usletter]{article}

\usepackage[totalwidth=450pt,totalheight=640pt]{geometry}

\usepackage{times}
\usepackage{helvet}
\usepackage{courier}

\usepackage{pgf}

%% IMPORTANT - Paper format fix %%
%%%%%%%%%%%%%%%%%%%%%%%%%%%%%%%%%%
%% The following two lines make LaTeX report the correct
%% paper size to PdfLaTeX so it overrides site specific
%% settings. It works to get the right dimensions in
%% the pdf document properties and should thus print 
%% correctly on letter paper. We have no way to test
%% print, though.
%% This should do no harm if site settings
%% already specify letter paper.
%\setlength{\pdfpagewidth}{\paperwidth}
%\setlength{\pdfpageheight}{\paperheight}

%%%%%%%%%%%%%%%%%%%%%%%%%%%%%%
% PDFMARK for TeX and GhostScript
% Uncomment and complete the following for metadata if
% your paper is typeset using TeX and GhostScript (e.g
% if you use .ps or .eps files in your paper):
% \special{! /pdfmark where
% {pop} {userdict /pdfmark /cleartomark load put} ifelse
% [ /Author (John Doe, Jane Doe)
% /Title (Input Your Paper Title Here)
% /Subject (Input the Proceedings Title Here)
% /Keywords (AAAI, artificial intelligence)
% /DOCINFO pdfmark}
%%%%%%%%%%%%%%%%%%%%%%%%%%%%%%
% PDFINFO for PDFTeX
% Uncomment and complete the following for metadata if
% your paper is typeset using PDFTeX
\pdfinfo{
  /Author (Christer Backstrom, Yue Chen, Peter Jonsson, Sebastian Ordyniak, Stefan Szeider)
  /Title (The Complexity of Planning Revisited -- A Parameterized Analysis)
  /Subject ()
  /Keywords ()
}
%%%%%%%%%%%%%%%%%%%%%%%%%%%%%%
% Uncomment if you want to use section numbers
% and change the 0 to a 1 or 2
\setcounter{secnumdepth}{2}

%%%%%%%%%%%%%%%%%%%%%%%%%%%%%%
%% Paper-specific sty files %%
%%%%%%%%%%%%%%%%%%%%%%%%%%%%%%

\usepackage{amsmath}
\usepackage{amssymb}
\usepackage{amsthm}

%%%%%%%%%%%%%%%%%%%%%%%%%%%%%
%%% Paper-specific macros %%%
%%%%%%%%%%%%%%%%%%%%%%%%%%%%%

%% Math %%
%%%%%%%%%%

%% Fix greek letters 
\renewcommand{\phi}{\varphi}

%% Sets, tuples etc.
\renewcommand{\emptyset}{\varnothing}  % The usual empty set,  not LaTeX std.
\newcommand{\union}{\cup} 		% Union

\newcommand{\setdiff}{-}
\newcommand{\card}[1]{{|#1|}}		% Cardinality     
\newcommand{\set}[1]{\{{#1}\}}          % Set 
\newcommand{\st}{\ |\ }		     	% `such that' symbol in sets

% Tuples and sequences
\newcommand{\tuple}[1]{\langle{#1}\rangle}  % Tuple
\newcommand{\seq}[1]{\langle #1 \rangle}

%% Use AMS theorem environments %%
%%%%%%%%%%%%%%%%%%%%%%%%%%%%%%%%%%

\theoremstyle{definition}

\theoremstyle{plain}
\newtheorem{theorem}{Theorem}

\newtheorem{proposition}{Proposition}

%% Genreal Complexity %%
%%%%%%%%%%%%%%%%%%%%%%%%

\newcommand{\instance}[1]{{\mathbb{#1}}}

%% Complexity classes

\newcommand{\cc}[1]{{\mbox{\textnormal{\textbf{#1}}}}}  %% Complexity class
  %% Complexity class

\newcommand{\poly}{\cc{P}}
\newcommand{\NP}{\cc{NP}}

\newcommand{\PSPACE}{\cc{\textsc{Pspace}}}

\newcommand{\FPT}{\cc{FPT}}
\newcommand{\Weft}{{\cc{W}}}
\newcommand{\W}[1]{{\Weft}{{[#1]}}}

%%  Instance variables
\newcommand{\insti}{\instance{I}}
\newcommand{\iplan}{\instance{P}}      % Plan instance

%% Planning formalisms
\newcommand{\strips}{{\textsc{Strips}}}
\newcommand{\sasplus}{{SAS$^+$}}

%% Instance objects for planning
\newcommand{\vars}{V} % Instance variables
\newcommand{\dom}{D}  % Instance domain
\newcommand{\acts}{A}  % Instance action set
\newcommand{\init}{I}  % Instance init state
\newcommand{\goal}{G}  % Instance goal
\newcommand{\pre}{\mathrm{pre}}  % Action precond
\newcommand{\eff}{\mathrm{eff}}  % 

\newcommand{\varval}{x}

\newcommand{\proj}[2]{{#1[{#2}]}}

\newcommand{\undef}{\mathbf{u}}

% Plans
\newcommand{\plan}{\omega}

\newcommand{\BPE}{\textsc{Bounded \sasplus\ Planning}}
\newcommand{\BPER}[1]{$\set{#1}$-\BPE}

% Partial-order plans
\newcommand{\occs}{\mathbf{O}}
\newcommand{\orders}{\mathbf{P}}
\newcommand{\links}{\mathbf{L}}
\newcommand{\link}[3]{{#1\stackrel{#2}{\longrightarrow}{#3}}}

%% Command to get extra vertical space when using negated literals etc.
\newcommand{\vexpand}{\phantom{$\neg{{{l^lg}}}$}}

%% Typesetting algorithms

\newcommand{\pbox}[1]{\makebox[1em][l]{#1}}

%% Vienna macros, partially redefined using macros above.

\newcommand{\hy}{\hbox{-}\nobreak\hskip0pt}

\newcommand{\SB}{\{\,}%
\newcommand{\SM}{\;{|}\;}%
\newcommand{\SE}{\,\}}%

%\nonfrenchspacing

%%\newcommand{\mdef}{:=}
\newcommand{\mdef}{=}

\newcommand{\AAA}{\mathcal{A}}

\newcommand{\OOO}{\acts}

% planning

\newcommand{\SAS}{\mbox{\sasplus}}

\newcommand{\post}{\eff}

\newcommand{\undv}{\undef}

% relations and formulas

\newcommand{\ope}{\textup{Act}}
\newcommand{\varR}{\textup{Var}}
\newcommand{\domR}{\textup{Dom}}

\newcommand{\goalRV}{\textup{Goalv}}

\newcommand{\postR}{\textup{Eff}}
\newcommand{\preRV}{\textup{Prev}}
\newcommand{\postRV}{\textup{Effv}}

% no space

\newcommand{\opeNS}{\textup{Act}}
\newcommand{\varRNS}{\textup{Var}}
\newcommand{\domRNS}{\textup{Dom}}

\newcommand{\initRNS}{\textup{Init}}

\newcommand{\goalRVNS}{\textup{Goalv}}

\newcommand{\preRNS}{\textup{Pre}}
\newcommand{\postRNS}{\textup{Eff}}
\newcommand{\preRVNS}{\textup{Prev}}
\newcommand{\postRVNS}{\textup{Effv}}

\newcommand{\diffRNS}{\textup{Diff-act}}

% formulas

\newcommand{\fvalue}{\textit{value}}
\newcommand{\fcheckpre}{\textit{check-pre}}
\newcommand{\fcheckpreall}{\textit{check-pre-all}}
\newcommand{\fcheckpregoal}{\textit{check-goal}}

\usepackage{natbib}

\newcommand{\shortcite}[1]{\citeyear{#1}}
\newcommand{\shortcitec}[2]{\citeyear[#1]{#2}}

%%%%%%%%%%%%%%%%%%%%
%%%% Paper body %%%%
%%%%%%%%%%%%%%%%%%%%

\title{The Complexity of Planning Revisited -- A Parameterized Analysis}

\author{
Christer B\"{a}ckstr\"{o}m$^1$,
Yue Chen$^2$,
Peter Jonsson$^1$,
Sebastian Ordyniak$^2$, and
Stefan Szeider$^2$\\[0.1cm]
\mbox{}\small$^1$Department of Computer Science, Link{\"o}ping University,
Link{\"o}ping, Sweden\\
\small christer.backstrom@liu.se, peter.jonsson@liu.se\\
\mbox{}\small$^2$Institute of Information Systems, Vienna University of Technology,
Vienna, Austria\\
\small chen@kr.tuwien.ac.at, ordyniak@kr.tuwien.ac.at, stefan@szeider.net
}

\date{}

%%% Document start %%%
%%%%%%%%%%%%%%%%%%%%%%

\begin{document}

\maketitle

\begin{abstract}
The early classifications of the computational 
complexity of planning under various restrictions
in \strips\ (Bylander)
and \sasplus\ (B\"{a}ckstr\"{o}m and Nebel)
have influenced following research in planning
in many ways.
We go back and reanalyse their
subclasses, but this time using the more modern
tool of parameterized complexity analysis.
This provides new results that
together with the old results give a more
detailed picture of the complexity landscape.
We demonstrate separation results not possible
with standard complexity theory,
which contributes to explaining  why certain cases of planning
have seemed simpler in practice than  theory
has predicted.
In particular, we show that certain restrictions of
practical interest are tractable in the parameterized
sense of the term, and that
a simple heuristic is sufficient to make a well-known 
partial-order planner exploit this fact.
\end{abstract}

\section{Introduction}

Bylander \shortcite{Bylander:aij94}
 made an extensive analysis of the computational complexity
of propositional \strips\ under various restrictions, 
like limiting
the number of preconditions or effects.
B\"{a}ckstr\"{o}m and Nebel \shortcite{Backstrom:Nebel:ci95}
 made a similar analysis of planning
with multi-valued state variables in the \sasplus\ formalism, 
investigating the complexity of all combinations of the P, U, B
and S restrictions introduced by B\"{a}ckstr\"{o}m and Klein
\shortcite{Backstrom:Klein:ci91}.
These were among the first attempts to understand why and when
planning is hard or easy and have had heavy influence on 
recent research in planning, of which we list a few
representative examples.
Gim\'{e}nez and Jonsson \shortcite{Gimenez:Jonsson:jair08},
Chen and Gim\'{e}nez \shortcite{Chen:Gimenez:jcss10}
as well as
Katz and Domshlak \shortcite{Katz:Domshlak:jair08}
have studied the complexity of planning for various 
restrictions on the causal graph, the latter also considering
combinations with restrictions P and U.
Katz and Domshlak
 further pointed out a particularly important usage of such 
results, saying:
\begin{quote}
  Computational tractability can be an invaluable tool
  even for dealing with problems that fall outside all
  the known tractable fragments of planning.
  For instance, tractable fragments of planning provide
  the foundations for most (if not all) rigorous
  heuristic estimates employed in planning as
  heuristic search.
\end{quote}
Two examples of slightly different ways to do this
are the following.
Helmert \shortcite{Helmert:aips04}  
used a planning algorithm for a simpler restricted problem
to compute heuristic values for subproblems and then
combine these values.
Similarly, the popular $h^ +$ heuristic \cite{Hoffmann:jair05}
exploits Bylander's results that planning is simpler with
only positive preconditions and uses this as a relaxation
for computing a heuristic value.
As a complement to such analyses of restricted planning
lanugages, Helmert \shortcite{Helmert:icaps06} studied 
the complexity and inherent restrictions in a number
of application problems.

We revisit these early classifications of \strips\ and of
\sasplus, but using parameterized complexity
analysis rather than standard complexity analysis.
Parameterized complexity analysis was invented to enable a more
fine-grained analysis than standard complexity analysis allows,
by treating a parameter as independent of the instance
rather than being a part of it.
Somewhat simplified, the idea is as follows.
Consider some problem and let $n$ denote the instance size.
We usually consider  a problem as tractable if it can be solved
by some algorithm in $O(n^c)$ time, that is, in polynomial time.
For many problems, like the \NP-hard problems, we do not know of any 
significantly faster way to solve them than doing brute-force search, 
which typically requires 
requires exponential, or at least super-polynomial, time in $n$.
In practice the search is often not exponential in the size
of the whole instance, but rather in some
smaller hard part of it.
In these cases the complexity may rather be something
like $O(2^k n^c)$ where $k$ is a parameter that is typically
independent of the instance size $n$.
%% That is, the complexity is separated into a hard part $2^k$ and 
%% an easy part $n^c$.
Thus, the combinatorial explosion is confined to the parameter~$k$.
We say that a problem is \emph{fixed-parameter tractable (FPT)} 
if it can be solved in this way.
%% separated in this way.
This is the essence of parameterized complexity theory and provides
a tractability concept which is more relaxed than the usual one,
while correlating better with tractability in practice for
real-world problems.
The theory also offers various classes for problems
that are not  FPT, for example $\W{1}$ and $\W{2}$.
Parameterized complexity analysis has contributed 
fundamental new insights into
complexity theory \cite{Downey:Fellows:book99}.
It is nowadays a very common
technique in many areas of computer science, including many
subareas of AI, like 
non-monotonic reasoning \cite{Gottlob:etal:aaai06},
constraints \cite{Gaspers:Szeider:ijcai11},
social choice \cite{Brandt:etal:ijcai11}
and argumentation \cite{Ordyniak:Szeider:ijcai11}.
The examples in planning are rare, however.
Downey, Fellows and Stege \shortcite{Downey:etal:dimacs99}
proved that \strips\ planning is $\W{1}$-hard and
conjectured that it  is also complete for  $\W{1}$.
We disprove this conjecture and show that \strips\ 
planning is actually $\W{2}$-complete.
%% \footnote{
%%   Fellows confirms this
%%   (personal communication, jan., 2012).
%% }.
There is also a result by B\"{a}ckstr\"{o}m and Jonsson
\shortcite{Backstrom:Jonsson:socs11} that \strips\ planning
is FPT under a certain restriction
that  deliberately lower-bounds the plan length, thus
not contradicting our results.
This restriction was motivated by a different agenda, 
studying the expressive
power of planning languages in general rather than
subclasses of a particular~language.

The parameterized analyses of planning that we provide in this paper
does not replace the earlier results or make them obsolete.
Since the parameterized complexity classes and the standard ones
are not comparable, our results must be viewed as supplementary,
providing further information.
If we consider the previous classifications together with
our parameterized classification we get a more detailed and informative
picture of planning complexity than by considering either of them alone.
This sheds new light on the discrepancy between
theoretical and practical results regarding the difficulty 
of planning.
For instance, while B\"{a}ckstr\"{o}m and Nebel proved that 
restriction U (actions can change only one variable)
does not make planning easier under standard analysis, 
we show that 
it is actually easier from a parameterized point of view.
This is interesting since restriction U has been considered
acceptable in some practical applications of planning, 
for instance on-board planning in spacecrafts
\cite{Williams:Nayak:ijcai97,Brafman:Domshlak:jair03}.
Furthermore, B\"{a}ckstr\"{o}m and Nebel showed that planning
is \NP-hard under restriction P (there are never two actions
that set the same variable value)
but did not provide any better
upper bound than in the unrestricted case.
We show that planning is actually  FPT 
under this restriction.
We also show that a standard partial-order planning 
algorithm \cite{McAllester:Rosenblitt:aaai91} can
exploit this fact with a minor modification that could be 
implemented as a heuristic.
This suggests that many successful applications of planning
might be cases where the problem is ``almost tractable'' and
the algorithm used happens to implicitly exploit this.
This is in line with the claim by Downey et.~al.
\shortcite{Downey:etal:cj08} that in many cases existing algorithms
with heuristics turn out to already be FPT algorithms.

The rest of the paper is laid out as follows.
Section 2  defines some concepts of parameterized
complexity theory and Section 3 defines the \sasplus\ 
and \strips\ languages.
The hardness results are collected in Section 4
and the membership results in Section 5, including
the result on using an existing planning algorithm.
Section 6 summarizes the results of the paper and
discusses some observations and consequences.
The paper ends with a discussion in Section~7.

\section{Parameterized Complexity}

We define the basic notions of Parameterized Complexity and
refer to other sources~\cite{Downey:Fellows:book99,Flum:Grohe:book06} 
for an in-depth treatment. 
A \emph{parameterized problem} is a set of pairs 
$\tuple{\insti,k}$,
the \emph{instances}, where $\insti$ is the main part and $k$ 
the \emph{parameter}. The parameter is usually a non-negative integer.
A parameterized problem is {\em fixed-parameter tractable (FPT)} if
there exists an algorithm that solves any instance $\tuple{\insti,k}$ of
size $n$ in time $f(k)n^{c}$ where $f$ is an arbitrary computable
function and $c$ is a constant independent of both $n$ and $k$. 
\FPT\ is the class of all fixed-parameter
tractable decision problems.

Parameterized complexity offers a completeness theory, similar
to the theory of NP-completeness, that allows the accumulation of
strong theoretical evidence that some parameterized problems
are not fixed-parameter tractable. This theory is based on a
hierarchy of complexity classes
\[\FPT \subseteq \W{1} \subseteq \W{2} \subseteq \W{3} \subseteq \cdots\]
where all inclusions are believed to be strict. Each class $\W{i}$ contains
all parameterized decision problems that can be reduced to a certain canonical
parameterized problem (known as {\sc Weighted $i$-Normalized Satisfiability}) 
under {\em parameterized} reductions.
A parameterized problem $L$ reduces to a parameterized problem $L'$
if there is a mapping $R$ from instances of $L$ to instances of $L'$ such
that
\begin{quote}
\begin{enumerate}
\item
$\tuple{\insti,k}$ is a {\sc Yes}-instance of $L$ if and only if $\tuple{\insti',k'}=R(\insti,k)$
is a {\sc Yes}-instance of $L'$,

\item
there is a computable function $g$ such that $k' \leq g(k)$, and

\item
there is a computable function $f$ and a constant $c$ such that $R$ can be
computed in time $O(f(k) \cdot n^c)$, where $n$ denotes the size
of $\tuple{\insti,k}$.
\end{enumerate}
\end{quote}
Not much is known about the relationship between the parameterized
complexity classes and the standard ones, except that $\poly \subseteq \FPT$.

\section{Planning Framework}\label{sec:planning-framework}

Let $\vars = \set{v_1,\ldots,v_n}$ be a finite set of
\emph{variables} over a finite \emph{domain} $\dom$.
Implicitly define $\dom^+ = \dom \union \set{\undef}$,
where  $\undef$ is a special value not present in $\dom$.
Then  $\dom^n$ is the set of \emph{total states}
and $(\dom^+)^n$ is the set of \emph{partial states}
over $\vars$ and $\dom$, where $\dom^n \subseteq (\dom^+)^n$.
The value of a variable $v$ in a state $s \in (\dom^+)^n$
is denoted $\proj{s}{v}$.
A \emph{\sasplus\ instance} is a tuple
$\iplan = \tuple{\vars,\dom,\acts,\init,\goal}$
where  $\vars$ is a set of variables, 
$\dom$ is a domain,
$\acts$ is a set of \emph{actions},
$\init \in \dom^n$ is the \emph{initial state}
and $\goal \in (\dom^+)^n$ is the \emph{goal}. %,where $n = \card{\vars}$.
Each action $a \in \acts$ has 
a \emph{precondition} $\pre(a) \in (\dom^+)^n$ and
an \emph{effect} $\eff(a) \in (\dom^+)^n$.
We will frequently use the convention that a variable has value $\undef$
in a precondition/effect unless a value is explicitly specified.
Let $a \in \acts$ and let $s \in \dom^n$.
Then $a$ is \emph{valid in $s$} if for all $v \in \vars$,
either $\proj{\pre(a)}{v} = \proj{s}{v}$ or $\proj{\pre(a)}{v} = \undef$.
Furthermore, the \emph{result of $a$ in $s$} is a state  $t \in \dom^n$
defined such that for all $v \in \vars$,
 $\proj{t}{v} = \proj{\eff(a)}{v}$ if $\proj{\eff(a)}{v} \neq \undef$
and $\proj{t}{v} = \proj{s}{v}$ otherwise.

Let $s_0, s_\ell \in \dom^n$ and 
let $\plan = \seq{a_1,\ldots,a_\ell}$ be a sequence of actions.
Then $\plan$ is a \emph{plan from $s_0$ to $s_\ell$} if
either\\
\indent
1) $\plan = \seq{}$ and $\ell = 0$ or\\
\indent
2) there are states $s_1,\ldots,s_{\ell-1} \in \dom^n$
such that for all $i$, where $1 \leq i \leq \ell$,
$a_i$ is valid in $s_{i-1}$ and $s_i$ is the result of $a_i$ in $s_{i-1}$.
A state $s \in \dom^n$ is a \emph{goal state}
if for all $v \in \vars$,
either $\proj{\goal}{v} = \proj{s}{v}$ or
$\proj{\goal}{v} = \undef$.
An action sequence $\plan$ is a \emph{plan for $\iplan$} if
it is a plan from $\init$ to some goal state 
$s \in \dom^n$.
We will study the following problem:

\smallskip

\begin{quote}
\noindent
\BPE\\
\textit{Instance:}
A tuple $\tuple{\iplan,k}$ where $\iplan$ is a \sasplus\ 
instance and $k$ is a positive integer.\\
\textit{Parameter:}
The integer $k$.\\
\textit{Question:}
Does $\iplan$ have a plan of length at most $k$?
\end{quote}

\smallskip

\noindent
We will consider the following four restrictions,
originally defined by B\"{a}ckstr\"{o}m and Klein
\shortcite{Backstrom:Klein:ci91}.
\begin{quote}
\begin{description}
  \item[P:]
    For each $v \in \vars$ and each $x \in \dom$
    there is at most one $a \in \acts$ such that
    $\proj{\eff(a)}{v} = x$.
  \item[U:]
    For each $a \in \acts$,  $\proj{\eff(a)}{v} \neq \undef$ 
    for exactly one $v \in \vars$.
  \item[B:]
    $\card{\dom} = 2$.
  \item[S:]
    For all $a,b \in \acts$ and all $v \in \vars$,
    if 
    $\proj{\pre(a)}{v} \neq \undef$,
    $\proj{\pre(b)}{v} \neq \undef$ and
    $\proj{\eff(a)}{v} = \proj{\eff(b)}{v} = \undef$,
    then $\proj{\pre(a)}{v} = \proj{\pre(b)}{v}$.
\end{description}
\end{quote}

\noindent
For any set $R$ of such restrictions we write \mbox{$R$-\BPE}
to denote the restriction of \BPE\ to only instances
satisfying the restrictions in $R$.

The propositional \strips\ language can be treated as 
the special case of \sasplus\ satisfying restriction B.
More precisely, this corresponds to the variant of \strips\ 
that allows negative preconditions.

\section{Hardness Results}

In this section we prove the two main hardness results of this paper.
For the first proof we need the following
\W{2}-complete problem~\cite[p.~464]{Downey:Fellows:book99}.

\smallskip
\begin{quote}
\noindent
    \textsc{Hitting Set}\\
\noindent    
    \emph{Instance:} A finite set $S$, a collection $C$ of 
    subsets of $S$ and an integer $k \leq \card{C}$.\\
\noindent    
    \emph{Parameter:} The integer $k$.\\  
\noindent
    \emph{Question:} Is there a hitting set $H \subseteq S$ 
    such that\linebreak[4]
    $\card{H} \leq k$ and $H \cap c \neq \emptyset$ 
    for every $c \in C$?
\end{quote}

\begin{theorem}\label{arb-hard}
  \BPER{B,S}\ 
  is $\W{2}$-hard, even
  when the actions have no preconditions.
\end{theorem}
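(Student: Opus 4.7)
The plan is to give a parameterized reduction from \textsc{Hitting Set}, which is $\W{2}$-complete. Given a hitting set instance $\tuple{S,C,k}$, I will build a \sasplus\ instance $\iplan$ whose variables track which subsets in $C$ have been ``hit,'' and whose actions correspond to elements of $S$; the plan length bound will be $k$ itself, so the parameter is preserved exactly.

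Concretely, for each subset $c \in C$ introduce a Boolean variable $v_c$ with $\proj{\init}{v_c} = 0$ and $\proj{\goal}{v_c} = 1$. For each element $s \in S$ introduce an action $a_s$ with $\pre(a_s) = \undef$ on every variable (so actions have no preconditions, as the theorem requires) and with $\proj{\eff(a_s)}{v_c} = 1$ for every $c \in C$ containing $s$, and $\undef$ otherwise. This construction is clearly polynomial time, uses only the domain $\set{0,1}$ (so restriction B holds), and since no action has any precondition, restriction S is vacuously satisfied. The parameter $k$ is carried over unchanged.

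The correctness argument goes both directions, mirroring the semantics of hitting sets. If $H = \set{s_1,\ldots,s_m} \subseteq S$ with $m \leq k$ hits every $c \in C$, then the action sequence $\seq{a_{s_1},\ldots,a_{s_m}}$ is executable from $\init$ (no preconditions) and sets $v_c$ to $1$ for each $c$, because at least one $s_i$ lies in $c$; hence it is a plan of length $\leq k$. Conversely, given any plan $\seq{a_{s_1},\ldots,a_{s_m}}$ of length $m \leq k$, the only way $v_c$ can reach value $1$ is for some action $a_{s_i}$ to write it, which requires $s_i \in c$; thus $\set{s_1,\ldots,s_m}$ is a hitting set of size at most $k$.

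The reduction is conceptually very clean, so I do not expect any genuine obstacle: the main thing to verify carefully is that both structural restrictions really are respected. Restriction B is immediate from the binary domain, and the restriction S condition is the only slightly subtle point; it becomes trivial exactly because we are free to use actions with empty preconditions, which the theorem allows. Together with the $\W{2}$-completeness of \textsc{Hitting Set}, this reduction establishes $\W{2}$-hardness of \BPER{B,S} in the stated strengthened form.
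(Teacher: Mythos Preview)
Your proposal is correct and matches the paper's own proof essentially line for line: the same parameterized reduction from \textsc{Hitting Set}, with one Boolean variable per set in $C$, one precondition-free action per element of $S$ whose effects flip exactly the variables of the sets containing that element, and the parameter $k$ carried over unchanged. Your explicit check that restriction~S holds vacuously because all preconditions are $\undef$ is exactly the right observation.
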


\begin{proof}
  By parameterized reduction from \textsc{Hitting Set}.
  Let $\insti=\tuple{S,C,k}$ be an instance of this problem. We construct an
  instance $\insti'=\tuple{\iplan,k'}$, 
  where $\iplan=\tuple{\vars,\dom,\acts,\init,\goal}$, of the
  \BPER{B,S}\ 
  problem such that $\insti$ has a
  hitting set of size at most $k$ if and only if there is a plan of
  length at most $k'=k$ for $\insti'$ as follows. 
  Let $\vars=\SB v_c \SM c \in C \SE$ and
  let $\acts = \set{a_e \st e \in S}$ where
  $\post(a_e)[v_c]=1$ if $e \in c$ and $\post(a_e)[v_c]=\undef$
  otherwise. 
  We set $\init=\tuple{0,\ldots,0}$ and
  $\goal=\tuple{1,\ldots,1}$.
  Clearly, $\iplan$
  satisfies restrictions B and S,
  and the actions have no preconditions.  
  It is now  routine to show that $\iplan$ has a plan of
  length at most $k'$ if and only if $\insti$ has a hitting set of
  size~$k$.
\end{proof}

  We continue with the second result.
  The following problem
  is \W{1}-complete~\cite{Pietrzak:jcss03}.

  \smallskip
  \begin{quote}
  \noindent
  \textsc{Partitioned Clique}\\
  \noindent  
  \emph{Instance:} A $k$-partite graph $G=\tuple{V,E}$ with partition
  $V_1,\dots,V_k$ such that $|V_i|=|V_j|=n$ for all $i$,\linebreak[4]
  where $1\leq i<j
  \leq k$.\\
  \noindent  
  \emph{Parameter:} The integer $k$.\\  
  \noindent
  \emph{Question:} Are there nodes $v_1,\dots,v_k$ such that
  $v_i\in V_i$ for all $i$, where $1\leq i \leq k$ and,\linebreak[4]
  $\{v_i,v_j\}\in E$ for all $i$, where $1 \leq i < j \leq k$? 
  (The graph 
  $ %K=
  \{\{v_1,\dots,v_k\},\, \{ \{v_i,v_j\} \st
  1\leq i < j \leq k \} \}$ is a \emph{$k$-clique} of $G$.)
  \end{quote}
  
  \smallskip

\begin{theorem} \label{unary-hard}
  \BPER{U,B,S}\ 
 is $\W{1}$-hard, even for instances where every action has at most 
 one precondition and one postcondition.
\end{theorem}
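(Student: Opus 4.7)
The plan is to reduce from \textsc{Partitioned Clique}, which was just introduced. Given an instance $\tuple{G,V_1,\dots,V_k}$ with $|V_i|=n$, I will build a \BPER{U,B,S}\ instance $\iplan=\tuple{\vars,\dom,\acts,\init,\goal}$ with parameter $k'=\Theta(k^2)$, making sure that every action has at most one precondition and that the sole effect changes a single binary variable (forced by U and B). For each $i\in[k]$ and each $u\in V_i$ I introduce a commitment variable $C_{i,u}$, along with partition-cover variables $p_i$ and pair-verification variables $e_{ij}$ for $i<j$, all initially $0$. The goal will include $p_i=1$ for every $i$ and $e_{ij}=1$ for every $i<j$, so that the planner is forced to commit at least one vertex per partition and witness an edge for every pair.

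The action set will consist of (i) unconditional commitment actions setting $C_{i,u}:=1$, (ii) partition confirmations with precondition $C_{i,u}=1$ and effect $p_i:=1$, and (iii) edge-verification gadgets for each pair $(i,j)$. A gadget for $(i,j)$ uses one or more auxiliary variables per edge $\{u,v\}\in E\cap(V_i\times V_j)$ and a short chain of 1-precondition actions that propagates a commitment of $u$ (respectively $v$) across the edge and then into $e_{ij}$. The budget $k'$ will be set tightly, $k'= k + c\binom{k}{2}$ for a small constant $c$, so that any feasible plan must use exactly one commitment per partition (extra commitments consume part of the pair budget and prevent completion) and exactly one chain per pair.

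The forward direction is then routine: from a clique $v_1,\dots,v_k$ in $G$ with $v_i\in V_i$, execute one commitment of $v_i$ per partition, confirm each $p_i$, and, for every pair $i<j$, execute the chain associated with edge $\{v_i,v_j\}$, reaching total length $k'$. For the reverse direction, tightness of the budget forces exactly one $C_{i,\cdot}$ per partition, so each pair's verification chain can only fire if its edge is incident to the committed vertex in at least one side. The delicate part is ruling out that $e_{ij}$ gets set by using different edges from the two sides of the gadget: to this end I plan to split the pair goal into two twin goal variables $e_{ij}^L$ and $e_{ij}^R$, both carried by a common per-edge witness variable $w_{ij,uv}$, and lay out the chain so that any plan meeting the tight budget is compelled to have its two half-certificates agree on the edge, which then forces the committed $v_i,v_j$ to be adjacent in $G$.

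The main obstacle is precisely this final point. With only one precondition per action, the conjunction ``$v_i$ is committed in $V_i$ and $v_j$ is committed in $V_j$ and $\{v_i,v_j\}\in E$'' cannot be checked by any single action, and naively splitting the check into two independent half-verifications yields false positives whenever $v_i$ has some neighbour in $V_j$ and $v_j$ has some neighbour in $V_i$ but $\{v_i,v_j\}\notin E$. Making the gadget airtight — by linking the two halves through a shared per-edge witness variable and calibrating $k'$ so that executing the two halves of different edges for the same pair is strictly more expensive than the clique-induced plan — is the step that carries the real combinatorial content of the reduction, and is where I expect to spend most of the effort in writing out the full proof.
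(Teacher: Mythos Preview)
Your reduction is from the same source problem as the paper's and you put your finger on exactly the right difficulty, but the gadget you sketch does not close the gap. With one precondition per action, your shared per-edge witness $w_{ij,uv}$ is set by a \emph{single} action whose precondition is either $C_{i,u}=1$ or $C_{j,v}=1$; once $w_{ij,uv}$ is $1$, both twin goals $e_{ij}^L$ and $e_{ij}^R$ follow without ever testing the other commitment. Concretely: commit $u\in V_i$ and some $v'\in V_j$ with $v'\neq v$, then for the pair $(i,j)$ fire the chain of the edge $\{u,v\}$ from the $u$-side only. This reaches both $e_{ij}^L$ and $e_{ij}^R$ at the same cost as your intended plan, yet certifies nothing about the committed vertex in $V_j$. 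Making ``different edges cost more'' does not help either: using one edge from one side already achieves the minimum, so there is no budget penalty to exploit. In short, splitting the pair goal into $L/R$ halves that both hang off a single one-sidedly settable witness cannot encode a conjunction.

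The paper sidesteps this by going \emph{edge-first} and then using a clean-up mechanism rather than a direct conjunction test. One first sets an edge variable $x(e)$; from $x(e)$ two actions produce endpoint ``marks'' $x(v_i,j)$ and $x(v_j,i)$; from each mark a checking variable $x(i,j)$ is set. Crucially, the goal requires all marks $x(v,j)$ to be back at $0$, and resetting any mark of $v$ needs a per-vertex enabler $x(v)$ to be set first. The budget $k'=7\binom{k}{2}+k$ is calibrated so that at most $k$ enablers can be afforded; since $2\binom{k}{2}=k(k-1)$ distinct marks must be touched and each vertex carries only $k-1$ marks, exactly $k$ vertices---one per partition---are used, and the $\binom{k}{2}$ selected edges are forced to lie among them. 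This ``set then mandatory clean-up, with clean-up rationed per vertex'' device is the missing idea; your commitment/confirmation layer is fine, but you need a mechanism of this kind (or an equivalent accounting trick that charges per \emph{vertex} rather than per pair) to turn the one-precondition half-checks into a genuine adjacency test.
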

\begin{proof}
  By parameterized reduction from \textsc{Partitioned Clique}.  
  Let $G = \tuple{V, E}$ be a $k$-partite
  graph where $V$ is partitioned into $V_1,\dots, V_k$. 
  Let $k_2 = \binom{k}{2}$
  and $k'=7k_2+k$.  
  We define $J_i=\SB j \SM 1 \leq j \leq k \textup{
    and }j\neq i \SE$ for every $1 \leq i \leq k$. 

  For the  \BPER{U,B,S}\ instance $\iplan$ we introduce 
  four kinds of variables:\\
  \indent
  1) For every $e\in E$ we introduce an \emph{edge variable}
    $x(e)$.\\
  \indent
  2) For every $1\leq i \leq k$ and $v\in V_i$ we introduce $k - 1$
    \emph{vertex variables} $x(v, j)$ where $j \in J_i$.\\
  \indent
  3) For every $1\leq i\leq k$ and every $j\in J_i$ we introduce a
    \emph{checking variable} $x(i, j)$.\\
  \indent
  4) For every $v\in V$, we introduce a
    \emph{clean-up variable} $x(v)$.

\smallskip

\noindent 
We also introduce five kinds of actions:

  \indent
  1) For every $e\in E$ we introduce an action $a^{e}$ such that
  $\post(a^{e})[x(e)] = 1$.\\
  \indent
  2) For every $e = \{v_i, v_j\}\in E$ where $v_i \in V_{i}$ and
  $v_j\in V_j$, we introduce two actions $a^{e}_{i}$ and $a^{e}_{j}$ such
  that $\pre(a^{e}_{i})[x(e)] = 1$, $\post(a^{e}_{i})[x(v_i, j)]=1$,
  $\pre(a^{e}_{j})[x(e)] = 1$ and $\post(a^{e}_{j})[x(v_j, i)]=1$.\\
  \indent
  3) For every $v \in V_i$ and $j\in J_i$, we introduce 
  an action $a^{v}_{j}$ such that $\pre(a^{v}_{j})[x(v,j)] = 1$ 
  and $\post(a^{v}_{j})[x(i, j)] = 1$.\\
  \indent
  4) For every $v \in V$, we introduce an
  action $a_v$ such that $\post(a_v)[x(v)] = 1$.\\
  \indent
  5) For every $v\in V_i$, for some $1\leq i \leq k$, and  $j\in J_i$,
  we introduce an action $a^j_v$ such that $\pre(a^j_v)[x(v)] = 1$ and
  $\post(a^j_v)[x(v, j)] = 0$.\\
\noindent
Let $\acts_1,\ldots,\acts_5$ be sets of actions corresponding 
to these five groups, 
and let 
$\acts = \acts_1 \union \ldots \union \acts_5$
be the set of all actions.
Let $\init=\tuple{0,\ldots,0}$ and  define $\goal$ such that
all checking variables $x(i, j)$ are $1$, all vertex variables
$x(v, j)$ are $0$ and 
the rest are $\undef$.

We now need to prove that $G$ has a $k$-clique if and
  only if there is a plan for $\iplan$ of length at most $k'$.
We sketch the leftward direction; the opposite is similar.
Assume
$G$ has a $k$-clique $K = \tuple{V_K, E_K}$
where $V_K = \{v_1, \ldots, v_k\}$ with $v_i\in V_i$ for every 
$1\leq i \leq k$.
For all $1\leq i < j \leq k$, we apply the actions
$a^{\{v_i,v_j\}}\in\OOO_1$ and
$a^{{\{v_i,v_j\}}}_{i},a^{{\{v_i,v_j\}}}_{j}\in\OOO_2$. This gives
$3k_2$ actions.
Then for each checking variable $x(i, j)$, for every $1\leq i\leq k$ and
$j\in J_i$, we apply $a^{v_i}_{j}\in\OOO_3$. This gives
$2k_2$ actions. 
Now we have all checking variables set to the required value $1$, but the
vertex variables $x(v_i, j)$, for $1\leq i\leq k$ and $j\in J_i$, still bear
the value $1$ which will have to be set back to $0$ in the goal state. So we
need some actions to ``clean up'' the values of these vertex variables.
First we set up a cleaner for each vertex $v_i$ by applying
$a_{v_i}\in\OOO_4$. This gives $k$ actions. Then we
use $a^j_{v_i}\in\OOO_5$ for all $j\in J_i$ to set the vertex variables
$x(v_i, j)$ to $0$. This requires
$2k_2$ actions. We observe that all the checking
variables are now set to $1$, and all vertex variables are set to $0$. The
goal state is therefore reached from the initial state by the execution
of exactly $k'= k + 7k_2$ actions, as required.
\end{proof}

\section{Memberhip Results}

Our membership results are based
on {\em first-order (FO) model checking} (Sec.~\ref{sec:modelcheck}) and
{\em partial-order planning} (Sec.~\ref{sec:partialplan}).

\subsection{Model Checking}
\label{sec:modelcheck}

For a class of FO formulas $\Phi$ we define the following parameterized
decision problem.

  \smallskip
  \begin{quote}
  \noindent
  \textsc{$\Phi$-FO Model Checking}\\
  \noindent
  \emph{Instance:} A finite structure $\AAA$, an FO formula $\phi \in \Phi$.\\
  \noindent
  \emph{Parameter:} The length of $\phi$.\\
  \noindent
%%  \emph{Question:} Is there a model $S$ of $\phi$, i.e. does $S \models \phi$?
%%  \emph{Question:} Is there a model $S$ of $\phi$, that is, does $S \models \phi$ hold?
  \emph{Question:} Does $\phi$ have a model?
  \end{quote}
  \smallskip

\noindent
Let $\Sigma_1$ be the class of all FO formulas of the form
$\exists x_1 \dots \exists x_t . \phi$
where $t$ is arbitrary and $\phi$ is a
quantifier-free FO formula.
For arbitrary positive integer $u$, let
$\Sigma_{2,u}$ denote the class of all
FO formulas of the form
$\exists x_1 \dots \exists x_t \forall y_1 \dots \forall y_u . \phi$
where $t$ is arbitrary and $\phi$ is a quantifier-free FO formula.
Flum and Grohe~\shortcitec{Theorem 7.22}{Flum:Grohe:book06} have shown
the following result.

\begin{proposition}\label{pro:fo-model-w1} \label{pro:fo-model-w2}
  The problem \textsc{$\Sigma_1$-FO Model Checking} is $\W{1}$-complete.
  For every positive integer $u$ the problem \textsc{$\Sigma_{2,u}$-FO
    Model Checking} is $\W{2}$-complete.
\end{proposition}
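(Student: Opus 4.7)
The plan is to prove the two assertions separately, each by establishing hardness and membership. For hardness, in both cases I would reduce from a canonical $\W{i}$-hard problem whose natural encoding lies in the appropriate fragment. For membership, the approach is to reduce to Weighted $i$-Normalized Satisfiability via a propositional encoding in which the existentially quantified FO variables become Boolean choice variables of weight bounded by the parameter.

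For the $\Sigma_1$ case, $\W{1}$-hardness would follow from a reduction from \textsc{Partitioned Clique}: given a $k$-partite graph $G=(V,E)$, use the structure with universe $V$, relations $E$ and $V_1,\ldots,V_k$, and the formula $\exists x_1\ldots\exists x_k.\bigwedge_i V_i(x_i)\wedge\bigwedge_{i<j}E(x_i,x_j)$, whose length depends only on $k$. For $\W{1}$-membership, I would introduce Boolean variables $X_{i,a}$ for each $i\in\{1,\ldots,t\}$ and each element $a$ of the universe, enforce that each $x_i$ takes exactly one value, and translate the quantifier-free matrix into a conjunction over these choice variables. The weight-$t$ satisfying assignments would then be in bijection with satisfying witnesses for the FO formula, yielding a parameterized reduction to Weighted $2$-Normalized Satisfiability.

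For the $\Sigma_{2,u}$ case, $\W{2}$-hardness would use a reduction from \textsc{Dominating Set} into $\Sigma_{2,1}$: on a graph $G$ with parameter $k$, the formula $\exists x_1\ldots\exists x_k\forall y.\bigvee_{i=1}^k\bigl(y=x_i\vee E(y,x_i)\bigr)$ holds precisely when $G$ admits a dominating set of size $k$, and for general $u\geq 1$ one simply pads with $u-1$ dummy universal variables. For membership, since $u$ is a fixed constant, the universal block $\forall y_1\ldots\forall y_u$ over a size-$n$ universe can be unrolled to a polynomial-size conjunction ranging over all $n^u$ assignments of the $y_j$'s; applying the propositional encoding from the $\Sigma_1$ case then produces a formula with one extra level of alternation, placing the problem in $\W{2}$.

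The main obstacle lies in the membership direction for $\W{2}$, because the unrolling of the universal block must be done without inflating the weight parameter: the target weight in the propositional encoding has to remain tied to $t$ (a function of the parameter alone), not to the size of the expansion. Achieving this requires careful normalization of the resulting propositional formula and a precise correspondence argument showing that weight-$t$ satisfying assignments correspond exactly to FO witnesses. The hardness directions, by contrast, are essentially direct formalizations of the canonical $\W{1}$- and $\W{2}$-complete graph problems inside the respective FO fragments.
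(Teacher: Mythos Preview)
The paper does not actually prove this proposition: it is quoted verbatim as a known result, with the attribution ``Flum and Grohe~(2006, Theorem~7.22) have shown the following result.'' There is no proof in the paper to compare your proposal against.

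That said, your sketch is a reasonable outline of the standard argument and is essentially the route taken in Flum and Grohe's textbook. The hardness directions via \textsc{Partitioned Clique} and \textsc{Dominating Set} are exactly the canonical reductions. For membership, your idea of propositionalizing the existential block with indicator variables $X_{i,a}$ and, in the $\Sigma_{2,u}$ case, unrolling the constant-width universal block into a polynomial-size conjunction is also the textbook approach. Your caveat about keeping the weight parameter tied to $t$ rather than to the expansion size is the right concern; the normalization step to reach the required $i$-normalized form is where the real bookkeeping lies, and your proposal gestures at but does not carry out that step. If you were asked to supply a proof here, you would need to make the normalization explicit (or simply cite Flum and Grohe, as the paper does).
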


We will reduce planning to model checking,
so for an arbitrary planning instance ${\insti}=\tuple{\iplan,k}$
(where $\iplan=\tuple{\vars,\dom,\acts,\init,\goal}$) of the
\BPE\ problem we need a relational structure $\AAA(\iplan)$ 
defined as:

\begin{itemize}
%% \item The universe $A$ of $\AAA(\iplan)$ is $\vars \cup \acts \cup \dom^+$.
\item The universe of $\AAA(\iplan)$ is $\vars \cup \acts \cup \dom^+$.
\item $\AAA(\iplan)$ contains the unary relations
$\varRNS \mdef \vars$, $\opeNS \mdef \acts$, and $\domRNS \mdef \dom^+$
together with the following relations of higher arity:
  \begin{itemize}
  \item $\initRNS \mdef \SB \tuple{v,\varval} \in \vars
    \times \dom \SM I[v]=\varval \SE$,
  \item $\goalRVNS \mdef \SB \tuple{v,\varval} \in \vars
    \times \dom \SM \goal[v]=\varval \neq \undv \SE$,
  \item $\preRNS \mdef \SB \tuple{a,v} \in \acts \times \vars
    \SM \pre(a)[v] \neq \undv 
    \SE$,
  \item $\postRNS \mdef \SB \tuple{a,v} \in \acts \times \vars
    \SM \post(a)[v] \neq \undv \SE$,
  \item $\preRVNS \mdef 
    \SB \tuple{a,v,\varval} \in \acts\times \vars \times \dom \SM 
      \pre(a)[v] = \varval \neq \undv \SE $
  \item $\postRVNS \mdef \SB \tuple{a,v,\varval} \in \acts
    \times \vars \times \dom \SM 
    \post(a)[v]=\varval \neq \undv \SE$.
  \end{itemize}
\end{itemize}

\begin{theorem}\label{in-W2}
  \BPE\ 
  is in $\W{2}$.
\end{theorem}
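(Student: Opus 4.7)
The plan is to give a parameterized reduction from $\BPE$ to $\Sigma_{2,2}$-FO Model Checking over the structure $\AAA(\iplan)$; the result then follows from Proposition~\ref{pro:fo-model-w2}. As a preprocessing step I would first add a no-op action (empty precondition, empty effect) to $\acts$, so that every plan of length at most $k$ can be padded to a plan of length exactly $k$, letting me search for a plan of fixed length.

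I would then encode the question by a formula
\[
\phi \;\equiv\; \exists a_1 \cdots \exists a_k \,\forall v \,\forall x\; \psi(a_1,\dots,a_k,v,x)
\]
where $\psi$ is quantifier-free. The existential variables represent the $k$ actions of the candidate plan and are guarded by $\opeNS$. The universal variables range over a single state variable $v$ and a single domain value $x$; together they let me talk about ``variable $v$ has value $x$'' at each time step. The body $\psi$ is a conjunction, over the bounded index set $i\in\{1,\dots,k\}$, of two clauses: (i) if $\preRVNS(a_i,v,x)$ holds, then the state just before step $i$ assigns $v$ the value $x$; and (ii) if $i=k$ and $\goalRVNS(v,x)$ holds, then the state after step $k$ assigns $v$ the value $x$. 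Once $v$ and $x$ are pinned by the universal prefix, the statement ``the state at time $i$ assigns $v$ the value $x$'' unfolds to a quantifier-free Boolean combination: either $\initRNS(v,x)\wedge\bigwedge_{j<i}\neg\postRNS(a_j,v)$, or $\bigvee_{j<i}\bigl(\postRVNS(a_j,v,x)\wedge\bigwedge_{j<j'<i}\neg\postRNS(a_{j'},v)\bigr)$, which simply says that the most recent action to touch $v$ set it to $x$ (or no such action exists and $v$ still carries its initial value $x$).

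The main subtlety I would need to navigate is keeping the formula inside $\Sigma_{2,2}$: any inner existential, for instance over an auxiliary value variable, would immediately push us out of the target class. Introducing $x$ into the universal prefix together with the use of the ternary relations $\preRVNS,\postRVNS$ is precisely what sidesteps this: the assertion ``$\pre(a_i)[v]$ and $\post(a_j)[v]$ are both defined and equal'' becomes the quantifier-free $\preRVNS(a_i,v,x)\wedge \postRVNS(a_j,v,x)$ under $\forall v\,\forall x$. Since all index ranges $i,j,j'\in\{1,\dots,k\}$ depend only on the parameter, every ``for each index'' becomes a finite conjunction or disjunction and $\psi$ has size polynomial in $k$, so $|\phi|$ is bounded by a computable function of the parameter. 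The structure $\AAA(\iplan)$ is computable in polynomial time in $|\iplan|$, and correctness boils down to the standard observation that $a_1,\dots,a_k$ is a plan iff every precondition is met in the appropriate intermediate state and the goal holds at the end. Hence this is a parameterized reduction and $\BPE\in\W{2}$.
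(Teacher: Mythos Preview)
Your proposal is correct and follows essentially the same approach as the paper: a parameterized reduction to \textsc{$\Sigma_{2,2}$-FO Model Checking} over $\AAA(\iplan)$, with a padding no-op, a formula of the shape $\exists a_1\cdots\exists a_k\,\forall v\,\forall x\,\psi$, and a quantifier-free ``value of $v$ after step $i$'' subformula built from $\initRNS$, $\postRNS$, and $\postRVNS$. The paper packages the latter via a recursive definition of $\fvalue$, but your explicit unrolling (``last action to touch $v$ set it to $x$, or none did and $\initRNS(v,x)$'') is the same thing expanded; the precondition and goal checks likewise coincide.
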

\begin{proof}
  By parameterized reduction to the $\W{2}$-complete 
  problem \textsc{$\Sigma_{2,2}$-FO
  Model Checking}. Let
  ${\insti}=\tuple{\iplan,k}$ (where $\iplan=\tuple{\vars,\dom,\acts,\init,\goal}$) be an 
  instance of 
  \textsc{Bounded $\SAS$ Planning}. We construct an
  instance ${\insti}'=\tuple{\AAA(\iplan),\phi}$ of \textsc{$\Sigma_{2,2}$-FO Model
    Checking}  such that ${\insti}$ has a solution if and
  only if ${\insti}'$ has a solution and the size of the formula $\phi$ is
  bounded by some function that only depends on $k$.
  Assume without loss of generality that $A$ contains a dummy action $\hat{a}$
  with no preconditions and no effects.
  To define $\phi$ we first need the following definitions.

   We  define a formula $\fvalue(\langle a_1,\ldots,a_i \rangle,v,\varval)$
  such that $\fvalue(\langle \rangle,v,\varval)=\initRNS(v, \varval)$ and
$\fvalue(\langle a_1,\ldots,a_i \rangle,v,\varval)  = 
    (\fvalue(\langle a_1,\ldots,a_{i-1} \rangle,v,\varval) \land \lnot 
   \postR(a_i,v) )
    \lor 
     \postRV(a_i,v,\varval)$ for every $0 \leq i \leq k$,
     which holds if  applying $a_1,\ldots,a_i$ in
     state $\init$
results in a state $s$ such that $s[v]=\varval$.

  We also define  a formula
    $\fcheckpre(\langle a_1,\ldots,a_i \rangle,v,\varval) = \preRV(a_i,v,\varval)
    \rightarrow \fvalue(\langle a_1,\ldots,a_{i-1} \rangle,v,\varval)$
for all $1 \leq i \leq k$,
  that is, $\forall v \forall \varval . \varR(v) \wedge \domR(\varval) \wedge 
  \fcheckpre(\langle a_1,\ldots,a_i \rangle,v,\varval)$ holds if
  all preconditions 
  of  action $a_i$ are satisfied after 
  actions $a_1,\dots,a_{i-1}$ have been executed in  state $\init$.
  We similarly define a formula
  $\fcheckpreall(\langle a_1,\ldots,a_k \rangle,v,\varval) =
  \bigwedge_{i=1}^k\fcheckpre(\langle a_1,\ldots,a_i \rangle,v,\varval)$
  that ``checks'' the preconditions of all actions in a sequence.

  Finally, define 
  $\fcheckpregoal(\langle a_1,\ldots,a_k \rangle,v,\varval) = \goalRV(v,\varval)
  \rightarrow \fvalue(\langle a_1,\ldots,a_k \rangle,v,\varval)$. The formula
  $\forall v \forall \varval . \varR(v) \wedge \domR(\varval) \wedge 
\fcheckpregoal(\langle a_1,\ldots,a_k \rangle,v,\varval)$
  holds if the
  goal state is reached after the execution of the sequence
  $a_1,\ldots,a_k$ in the state $\init$.

  We can now define the formula $\phi$ itself as:
\[\begin{array}{rcl}
\phi & = & \exists a_1 \ldots \exists a_k \forall v \forall \varval\,.\,\\
& & \phantom{m}(\bigwedge_{i=1}^k\ope(a_i))\ \land\\
& & \phantom{m}(\varR(v) \land \domR(\varval) 
    \: \rightarrow \\
& & \phantom{mm}\fcheckpreall(\langle a_1,\dots,a_k \rangle,v,\varval) \: \land \\
& & \phantom{mm}\fcheckpregoal(\langle a_1,\dots,a_k \rangle, v,\varval)).
\end{array}\]
  Evidently $\phi \in \Sigma_{2,2}$, the length of $\phi$ is bounded
  by some function that only depends on $k$ and $\AAA(\iplan) \models
  \phi$ if and only if $\iplan$ has a plan of length at most $k$.
  The dummy action guarantees that there is a plan exactly of length
  $k$ if there is a shorter plan.
\end{proof}

The proof of the next theorem resembles the previous proof but
the details are a bit involved. Thus, we only provide a high-level
description of it.

\begin{theorem}\label{in-W1}
  \BPER{U}\ 
  is in $\W{1}$.
\end{theorem}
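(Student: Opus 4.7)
The plan is to reduce \BPER{U}\ to \textsc{$\Sigma_1$-FO Model Checking} (which is $\W{1}$-complete by Proposition~\ref{pro:fo-model-w1}) along the same lines as the previous reduction, but this time eliminating the universal block $\forall v \forall \varval$ from the formula. The key enabling observation is that under restriction U each action modifies exactly one variable, so a plan of length at most $k$ touches at most $k$ variables in total; hence every precondition of every action in the plan must be supported either by $\init$ or by one of the fewer than $k$ preceding actions.

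Before building the formula I would \emph{preprocess} the instance. Call a precondition of $a$ on variable $v$ \emph{problematic} if $\init[v] \neq \pre(a)[v]$. Since under U each preceding action can re-supply the value of at most one variable, an action with $k$ or more problematic preconditions cannot appear in any valid plan of length $k$ and can be deleted; analogously, if $\init$ violates more than $k$ goal entries the instance is immediately a \textsc{No}-instance. After pruning, every surviving action has at most $k-1$ problematic preconditions and there are at most $k$ problematic goal entries, so I can precompute polynomial-size indexing relations $\textup{ProbPre}(a,r,v,\varval)$ and $\textup{ProbGoal}(r,v,\varval)$, each functional in its first arguments, together with an indicator $\textup{HasProbPre}(a,r)$.

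The formula $\phi$ would then existentially quantify the actions $a_1,\dots,a_k$ of the plan, a support action $s_{i,r}$ with auxiliary witnesses $v_{i,r},\varval_{i,r}$ for each pair $(i,r)$ with $i\leq k$ and $r\leq k-1$, and a goal support $g_r$ with witnesses $v^g_r,\varval^g_r$ for each $r\leq k$. The quantifier-free body is a conjunction indexed by bounded tuples of positions in $\{1,\dots,k\}$ expressing (a) \emph{action chaining}: for every $j<i$, if no intermediate $a_{j'}$ modifies $\var(a_j)$, then any precondition of $a_i$ on $\var(a_j)$ matches $\eff(a_j)$; (b) \emph{support of problematic preconditions}: the implication $\textup{HasProbPre}(a_i,r) \rightarrow \textup{ProbPre}(a_i,r,v_{i,r},\varval_{i,r}) \wedge s_{i,r}=a_j$ for some $j<i$ with $\var(a_j)=v_{i,r}$, $\eff(a_j)[v_{i,r}]=\varval_{i,r}$, and no later $a_{j'}$ with $j<j'<i$ overwriting $v_{i,r}$; and (c) the analogous clauses for problematic goal entries plus a check that any action whose modified variable is a non-problematic goal entry leaves it in the goal value if it is the latest modifier. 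Every ``latest modifier'' check expands to a conjunction over $O(k)$ indices, so the whole formula lives in $\Sigma_1$ with size bounded by a function of $k$ alone.

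The main technical obstacle will be expressing (b) (and its goal counterpart) without any hidden universal quantifier, since the intended meaning ``every problematic precondition of $a_i$ is supported'' reads universally. The trick is to exploit that $\textup{ProbPre}$ is functional in $(a,r)$: the outer existential witnesses $v_{i,r},\varval_{i,r}$, together with the guard $\textup{HasProbPre}(a_i,r)$ sitting inside the implication, deterministically pin down the correct variable-value pair whenever one exists, so the body stays quantifier-free and the whole formula remains in $\Sigma_1$. After the routine verification that $\phi$ has length bounded by a function of $k$ and that $\AAA(\iplan) \models \phi$ iff $\iplan$ has a plan of length at most $k$, Proposition~\ref{pro:fo-model-w1} delivers the claimed membership in $\W{1}$.
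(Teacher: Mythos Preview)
Your proposal is correct and follows essentially the same strategy as the paper: reduce to \textsc{$\Sigma_1$-FO Model Checking} by exploiting that under restriction~U a $k$-step plan changes at most $k$ variables, prune actions whose preconditions differ from $\init$ in too many places, and then encode the (now bounded) set of ``problematic'' preconditions of each surviving action into the relational structure so they can be enumerated via existential quantification rather than a universal one. The only cosmetic difference is the padding mechanism---the paper attaches to each action a size-$k$ set of important variables padded with fresh dummy elements and guesses this set, whereas you use an indexed functional relation $\textup{ProbPre}$ together with the occupancy indicator $\textup{HasProbPre}$---and you spell out the action-chaining check (your item~(a)) a bit more explicitly than the paper's sketch does.
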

\begin{proof}[Proof sketch:]
In order to show $\W{1}$\hy membership of 
\BPER{U}\ 
we will reduce this problem to \textsc{$\Sigma_1$\hy FO
  Model Checking} and the basic idea is fairly close to the
proof of Theorem~\ref{in-W2}. However, we cannot directly 
express within $\Sigma_1$
that all the preconditions of an action are satisfied, as this would
require a further universal quantification and thus move the formula to
$\Sigma_{2,u}$. 
Hence, we avoid the universal quantification with a trick:  
we observe that the preconditions only need to
be checked with respect to at most $k$ ``important'' variables,
that is, the
variables in which the preconditions of an action differ from the
initial state. If the precondition differs in more than~$k$ variables from
the initial state, then it cannot be used in any plan of length $k$.  
It is now possible to guess the important variables with existential
quantifiers. 

It remains  to check that  all the
significant variables are among these guessed variables. 
We  do this without universal quantification by adding dummy
elements $d_1,\dots,d_k$ and a relation $\diffRNS$ to the relational
structure $\AAA(\iplan)$.  The relation associates with each action
exactly $k$ different elements. These elements consist of all the
important variables of the action, say the number of these variables is
$k'$, plus $k-k'$ dummy elements. Hence, by guessing these $k$ elements
and eliminating the dummy elements, the formula knows all the
significant variables of the action and can check the preconditions
without a universal quantification.
\end{proof}

\subsection{Partial-order Planning}
\label{sec:partialplan}

To prove that \BPER{P} is in \FPT\ 
we  use a slight modification of  the well-known 
planning algorithm by McAllester and Rosenblitt
\shortcite{McAllester:Rosenblitt:aaai91},
which we refer to as MAR.
It appears in  Figure~\ref{alg:mar},
combining the original  and the modified versions
into one.
The only modification is the value of $L'$, which could
easily be implemented as a heuristic for the original algorithm.
The algorithm is  generalized to \sasplus\ rather
than propositional \strips, which is  straightforward
and appears in the literature
\cite{Backstrom:aaai94ws}.
We only explain the algorithm and our notation,
referring the reader to the original paper for details.

The algorithm works on a partially ordered set of
\emph{action occurences}, each occurence being a unique 
copy of an action.
For each precondition $\proj{\pre(o_c)}{v} \neq \undef$ of an occurence
$o_c$, the algorithm uses a \emph{causal link} $\link{o_p}{v=x}{o_c}$
to explicitly keep track of which other occurence
$o_p$ with $\proj{\eff(o_p)}{v}=x$
guarantees this precondition.
An occurence $o_t$ is a \emph{threat} to 
$\link{o_p}{v=x}{o_c}$
if $\proj{\eff(o_t)}{v} \neq \undef$ and 
$o_p \neq o_t \neq o_c$.
A \emph{plan structure} for a planning instance
$\iplan = \tuple{\vars,\dom,\acts,\init,\goal}$
is a tuple $\Theta = \tuple{\occs,\orders,\links}$ 
where $\occs$ is a finite set of action occurences 
over $\acts$,
$\orders$ is a binary relation over $\occs$ and
$\links$ is a set of causal links.
We write $o \prec o'$ for $\tuple{o,o'} \in \orders$.
Furthermore, $\occs$ always contains the two special 
elements $o_\init,o_\goal$, where
$\eff(o_\init) = \init$, $\pre(o_\goal)=\goal$ and
$o_\init \prec o_\goal \in \orders$.
An \emph{open goal} in $\Theta$ is a tuple 
$\tuple{o,v,x}$ such that $o \in \occs$,
$\proj{\pre(o)}{v} = x \neq \undef$ and there is no
$o' \in \occs$ such that $\link{o'}{v=x}{o} \in \links$.

We say $\Theta$ is \emph{complete} if both the following
conditions hold: %\\
1) For all $o_c \in \occs$ and all $v \in \vars$ such that
$\proj{\pre(o_c)}{v} = x \neq \undef$, 
there is a causal link $\link{o_p}{v=x}{o_c} \in \links$. %\\
2) For every  $\link{o_p}{v=x}{o_c} \in \links$ and 
every threat $o_t \in \occs$ to 
$\link{o_p}{v=x}{o_c}$, either 
$o_t \prec o_p \in \orders$ or $o_c \prec o_t \in \orders$.
McAllester and Rosenblitt proved that if starting with
$\Theta= \tuple{\set{o_\init,o_\goal},
  \set{o_\init \prec o_\goal},\emptyset}$
then the algorithm fails if there is no plan and otherwise
returns a plan structure $\tuple{\occs,\orders,\links}$
such that any topological sorting of 
$\occs \setdiff \set{o_\init,o_\goal}$ consistent with
$\orders$ is a plan.

\begin{figure}
  \begin{center}
    \begin{tabular}{l}
    \pbox{1}\hspace{0em}\textbf{function} 
    Plan($\Theta = \tuple{\occs,\orders,\links}$,$k$)\\
    \pbox{2}\hspace{1em}\textbf{if} $\tuple{\occs,\orders}$ is not acyclic
    \textbf{or} $\card{\occs} > k + 2$ %\\
    \textbf{then} \textbf{fail}\\
    \pbox{3}\hspace{1em}\textbf{elsif} $\Theta$ is complete %\\
    \textbf{then} \textbf{return} $\Theta$\\
    \pbox{4}\hspace{1em}\textbf{elsif} there is an 
    $\link{o_p}{v=x}{o_c} \in \links$ with 
    a threat $o_t \in \occs$\\
    \pbox{}\hspace{2em}
    and neither $o_t \prec o_p \in \orders$ nor
    $o_c \prec o_t \in \orders$ \textbf{then} \\
    \pbox{5a}\hspace{2em}\textbf{choose} either of\\
    \pbox{5b}\hspace{3em} \textbf{return} 
    Plan($\tuple{\occs,\orders \union \set{o_t \prec o_p},\links}$,$k$)\\
    \pbox{5c}\hspace{3em} \textbf{return} 
    Plan($\tuple{\occs,\orders \union \set{o_c \prec o_t},\links}$,$k$)\\
    \pbox{6}\hspace{1em}\textbf{else} arbitrarily choose an open goal 
    $g=\tuple{o_c,v,x}$\\
    \pbox{7a}\hspace{2em}\textbf{nondeterministically do either}\\
    \pbox{7b}\hspace{2em} 1) \textbf{nondeterministically choose} an $o_p \in \occs$\\
    \pbox{  }\hspace{3em} such that $\proj{\eff(o_p)}{v}=x$\\
    \pbox{7c}\hspace{2em} 2) \textbf{if} there is an $a \in \acts$
    such that $\proj{\eff(a)}{v}=x$\\
    \pbox{  }\hspace{3em}  \textbf{then} let $o_p$ be a new occurence of $a$\\
    \pbox{8}\hspace{2em}\textbf{if} original algorithm \textbf{then}\\
    \pbox{}\hspace{3em}  $\links' :=  \set{\link{o_p}{v=x}{o_c}}$\\
    \pbox{9}\hspace{2em}\textbf{if} modifed algorithm \textbf{then}\\
    \pbox{}\hspace{3em}  $\links' := 
    \{ \link{o_p}{w=y}{o_c} \st 
    \proj{\eff(o_p)}{w} = \proj{\pre(o_c)}{w} = y$,\\
    \pbox{}\hspace{3em} \phantom{$\links' := \{$}
    $y \neq \undef$ and $\tuple{o_c,w,y}$ is an open goal $\}$\\
    \pbox{10}\hspace{2em} \textbf{return} 
    Plan($\tuple{\occs \union \set{o_p},\orders \union \set{o_p\prec o_c},
      \links \union \links' }$,$k$)\\
    \end{tabular}
    \caption{The MAR algorithm.}
    \label{alg:mar}
  \end{center}
\end{figure}

That the modified variant of MAR is correct for \sasplus-P
instances is based on the following observation about 
the original
variant applied to such instances.
Consider three occurences $o_1,o_2,o_3$ such that
$o_1$ has  preconditions $v=x$ and $w=y$
which are both effects of $o_2$.
If $v=x$ is also an effect of $o_3$, then also $w=y$ must be
an effect of $o_3$ due to restriction P.
However, the algorithm must link both conditions from the same
occurence, either $o_2$ or $o_3$, since it would
otherwise add both $o_2 \prec o_3$ and $o_3 \prec o_2$,
causing it to fail.
The set of possible outcomes for the two variants are thus identical,
but the modified variant is an FPT algorithm.

\begin{theorem}\label{th:fpt}
  \BPER{P} is in \FPT.
\end{theorem}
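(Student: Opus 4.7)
The plan is to apply the modified MAR algorithm of Figure~\ref{alg:mar} and verify two things: (i) under restriction P it is still sound and complete, and (ii) it runs in $f(k)\cdot\card{\iplan}^{O(1)}$ time. The paragraph following the figure already gives the key observation for~(i): by P, each value $v{=}x$ has a unique producer action, so any two occurrences that could serve a precondition $v{=}x$ of some $o_c$ are in fact occurrences of the same action and therefore have identical preconditions and effects. Hence whenever the original MAR would link distinct preconditions of $o_c$ to two different occurrences $o_p$ and $o_p'$ of the same action, we may reroute the links from $o_p'$ to $o_p$; the resulting plan structure is still valid and is producible by the modified variant. Iterating the transformation over all consumer occurrences and producer actions turns any plan structure returned by the original algorithm into one that the modified variant also returns; combined with soundness (the modified variant only adds causal links that are actually required by preconditions of $o_c$), this establishes correctness.

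For the running time I would bound the size of the nondeterministic search tree. Since the algorithm fails as soon as $\card{\occs}>k+2$, the plan structure carries at most $k+2$ occurrences throughout. Along any root-to-leaf path there are therefore at most $(k+2)^2$ establishment steps (lines~6--10), because the bundling at line~9 ensures that for each ordered pair $(o_p,o_c)$ of occurrences establishment is performed at most once, and at most $(k+2)^3$ threat-resolution steps (lines~4--5c), since each such step corresponds to a distinct triple $(o_p,o_c,o_t)$. Each establishment step has at most $k+3$ children: restriction P pins down a unique action $a$ with $\eff(a)[v]=x$, so line~7b chooses among at most $k+2$ existing occurrences of $a$ and line~7c creates one new occurrence. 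Threat-resolution steps branch into $2$ children. The search tree therefore has at most $(k+3)^{O(k^3)}=2^{O(k^3\log k)}$ leaves, each reached by polynomial work, giving overall running time $f(k)\cdot\card{\iplan}^{O(1)}$.

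The main obstacle is the completeness direction of~(i): we must rule out the possibility that the bundled establishment in line~9 drives the modified algorithm into a dead end on an instance that the original algorithm solves. The rerouting argument above handles this, but its success relies crucially on restriction P; without P, forcing a bundled link $\link{o_p}{w=y}{o_c}$ could block an ordering that the only viable alternative producer of $w{=}y$ requires, and the resulting loss of flexibility would in general break completeness.
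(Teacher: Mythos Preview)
Your proposal is correct and follows essentially the same route as the paper: run the modified MAR algorithm and bound the nondeterministic search tree by a function of $k$ alone. Your threat-resolution bound of $(k+2)^3$ via distinct triples $(o_p,o_c,o_t)$ is looser than the paper's $(k+2)^2$ (each such step adds a fresh pair to $\orders$, and $\card{\orders}\le(k+2)^2$), but this is immaterial for the \FPT\ conclusion.
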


\begin{proof}
  Consider the modified version of MAR.
  All nodes in the search tree
  run in polynomial time in the instance size.
  The search tree contains two types of nodes:
  leafs that terminate in either line 2 or 3 and
  nodes that make a nondeterministic choice either
  in line 5 or in line 7 and then make a recursive call.
  The latter nodes correspond to branching points in
  the search tree, and we analyse their contribution to
  the search-tree size separately.

  Each time line 5 is visited, it adds a new element to 
  $\orders$,
  which thus grows monotonically along every branch
  in the search tree.
  We can thus visit line 5 at most  $(k+2)^2$ times along any
  branch since $\card{\orders} \leq (k+2)^2$. 
  There are two choices in line 5 so it  contributes
  at most a factor $2^{(k+2)^2}$ to the size of the search tree.
  Also  $\occs$ grows monotonically along every branch
  and $\card{\occs} \leq k+2$.
  At any visit to lines 6--10 there are 
  thus at most $k+1$ occurences with open goals
  and at most $k+1$ different occurences to link these goals to.
  That is, the preconditions of each occurence are partitioned
  into at most $k+1$ parts, each part having all its
  elements linked at once in line 9.
  Lines 6--10 can thus be visited at most 
  $(k+1)^2$ times along any branch in the search tree.
  Since there are at most $k+1$ existing occurences to link to
  and at most one action to instantiate as a new occurence,
  the branching factor is $k+2$.
  The contribution of this  to the size of the
  search tree is thus at most a factor $(k+2)^{(k+1)^2}$.
  Hence, the total search-tree size is at most 
  $2 \cdot 2^{(k+2)^2} (k+2)^{(k+1)^2} $
  where the factor $2$ accounts for the leaves.
  This does not depend on the instance size
  and each node is polynomial-time in the instance size
  so the modified MAR algorithm is an FPT algorithm.
\end{proof}

\section{Summary of Results}
\label{sec:class}

The complexity results for the various combinations of
restrictions P, U, B and S are displayed in
Figure~\ref{fig:pubs-lattice}.
Solid lines denote separation results by
B\"{a}ckstr\"{o}m and Nebel
\shortcite{Backstrom:Nebel:ci95},
using standard complexity analysis,
while dashed lines denote separation results 
from our parameterized analysis.
The $\W{2}$-completeness results follow from
Theorems~\ref{arb-hard} and \ref{in-W2},
the $\W{1}$-completeness results follow from
Theorems~\ref{unary-hard} and \ref{in-W1},
and the \FPT\ results follow from
Theorem~\ref{th:fpt}.

\begin{figure}[tb]
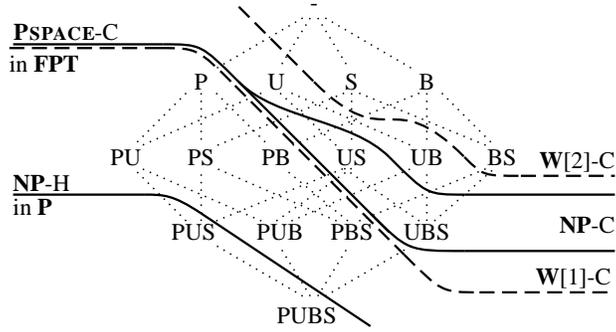

  \begin{center}
    \begin{pgfpicture}{-2cm}{-2cm}{2cm}{2cm}
      % Set scale
      \newdimen\pubsdim
      \pgfextractx{\pubsdim}{\pgfpoint{5mm}{0mm}}
      \pgfsetxvec{\pgfpoint{5mm}{0mm}}
      \pgfsetyvec{\pgfpoint{0mm}{5mm}}

      \small
      \pgfsetlinewidth{0.2mm}
      \pgfsetdash{{0.2mm}{0.8mm}}{0mm}

      % Top element
      \pgfputat{\pgfxy(0,4)}{\pgfbox[center,center]{-}}

      % Lines from top
      \pgfline{\pgfxy(-0.3,3.7)}{\pgfxy(-2.7,2.3)} % T-P
      \pgfline{\pgfxy(-0.1,3.7)}{\pgfxy(-0.9,2.4)} % T-U
      \pgfline{\pgfxy(0.1,3.7)}{\pgfxy(0.9,2.4)}   % T-S
      \pgfline{\pgfxy(0.3,3.7)}{\pgfxy(2.7,2.3)}   % T-B

      % Level 1
      \pgfputat{\pgfxy(-3,2)}{\pgfbox[center,center]{P}}
      \pgfputat{\pgfxy(-1,2)}{\pgfbox[center,center]{U}}
      \pgfputat{\pgfxy(1,2)}{\pgfbox[center,center]{S}}
      \pgfputat{\pgfxy(3,2)}{\pgfbox[center,center]{B}}

      % Lines from P
      \pgfline{\pgfxy(-3.4,1.7)}{\pgfxy(-4.8,0.3)} % P-PU
      \pgfline{\pgfxy(-3.0,1.7)}{\pgfxy(-3.0,0.3)} % P-PS
      \pgfline{\pgfxy(-2.6,1.7)}{\pgfxy(-1.2,0.3)} % P-PB

      % Lines from U
      \pgfline{\pgfxy(-1.4,1.7)}{\pgfxy(-4.6,0.3)} % U-PU
      \pgfline{\pgfxy(-0.9,1.7)}{\pgfxy(0.6,0.3)}  % U-US
      \pgfline{\pgfxy(-0.6,1.7)}{\pgfxy(2.6,0.3)}  % U-UB

      % Lines from S
      \pgfline{\pgfxy(0.8,1.7)}{\pgfxy(-2.6,0.3)}  % S-PS
      \pgfline{\pgfxy(1.0,1.7)}{\pgfxy(1.0,0.3)}   % S-US
      \pgfline{\pgfxy(1.2,1.7)}{\pgfxy(4.6,0.3)}   % S-BS

      % Lines from B
      \pgfline{\pgfxy(2.8,1.7)}{\pgfxy(-0.6,0.3)}  % B-PB
      \pgfline{\pgfxy(3.0,1.7)}{\pgfxy(3.0,0.3)}   % B-UB
      \pgfline{\pgfxy(3.4,1.7)}{\pgfxy(4.8,0.3)}   % B-BS

      % Level 2
      \pgfputat{\pgfxy(-5,0)}{\pgfbox[center,center]{PU}}
      \pgfputat{\pgfxy(-3,0)}{\pgfbox[center,center]{PS}}
      \pgfputat{\pgfxy(-1,0)}{\pgfbox[center,center]{PB}}
      \pgfputat{\pgfxy(1,0)}{\pgfbox[center,center]{US}}
      \pgfputat{\pgfxy(3,0)}{\pgfbox[center,center]{UB}}
      \pgfputat{\pgfxy(5,0)}{\pgfbox[center,center]{BS}}

      % Lines from PUS
      \pgfline{\pgfxy(-3.4,-1.7)}{\pgfxy(-4.8,-0.3)} % PUS-PU
      \pgfline{\pgfxy(-3.0,-1.7)}{\pgfxy(-3.0,-0.3)} % PUS-PS
      \pgfline{\pgfxy(-2.6,-1.7)}{\pgfxy(0.8,-0.3)} % PUS-US

      % Lines from PUB
      \pgfline{\pgfxy(-1.4,-1.7)}{\pgfxy(-4.6,-0.3)} % PUB-PU
      \pgfline{\pgfxy(-1.0,-1.7)}{\pgfxy(1.0,-0.3)}  % PUB-PB
      \pgfline{\pgfxy(-0.6,-1.7)}{\pgfxy(2.6,-0.3)}  % PUB-UB

      % Lines from PBS
      \pgfline{\pgfxy(0.6,-1.7)}{\pgfxy(-2.6,-0.3)}  % PBS-PS
      \pgfline{\pgfxy(0.9,-1.7)}{\pgfxy(-0.8,-0.3)}  % PBS-PB
      \pgfline{\pgfxy(1.2,-1.7)}{\pgfxy(4.6,-0.3)}   % PBS-BS

      % Lines from UBS
      \pgfline{\pgfxy(2.8,-1.7)}{\pgfxy(0.8,-0.3)}  % UBS-US
      \pgfline{\pgfxy(3.0,-1.7)}{\pgfxy(3.0,-0.3)}  % UBS-UB
      \pgfline{\pgfxy(3.2,-1.7)}{\pgfxy(4.8,-0.3)}  % UBS-BS

      % Level 3
      \pgfputat{\pgfxy(-3.2,-2)}{\pgfbox[center,center]{PUS}}
      \pgfputat{\pgfxy(-0.9,-2)}{\pgfbox[center,center]{PUB}}
      \pgfputat{\pgfxy(1,-2)}{\pgfbox[center,center]{PBS}}
      \pgfputat{\pgfxy(3,-2)}{\pgfbox[center,center]{UBS}}

      % Lines from PUBS
      \pgfline{\pgfxy(-0.3,-3.7)}{\pgfxy(-2.7,-2.3)} % T-P
      \pgfline{\pgfxy(-0.1,-3.7)}{\pgfxy(-0.9,-2.4)} % T-U
      \pgfline{\pgfxy(0.1,-3.7)}{\pgfxy(0.9,-2.4)}   % T-S
      \pgfline{\pgfxy(0.3,-3.7)}{\pgfxy(2.7,-2.3)}   % T-B

      % Bottom
      \pgfputat{\pgfxy(-0.2,-4.2)}{\pgfbox[center,center]{PUBS}}

      %% Old standard complexity results %%

      %% Line style for standard complexity results
      \pgfsetlinewidth{0.3mm}
      \pgfsetdash{}{0mm}

      % Separation for P
      \pgfxyline(-8.0,-1.0)(-4.5,-1.0)
      \pgfxycurve(-4.5,-1.0)(-4.0,-1.0)(-3.75,-1.0)(-3.0,-1.5)
        \pgfxyline(-3.0,-1.5)(1.5,-4.5)
        \pgfstroke
        \pgfputat{\pgfxy(-8.0,-1.1)}{\pgfbox[left,top]{in \poly}}

      % Separation for strongly NP-hard
        \pgfxyline(-8.0,3.0)(-4.0,3.0)
        \pgfxycurve(-4.0,3.0)(-3.0,3.0)(-3.0,3.0)(-2.0,2.0)
        \pgfxyline(-2.0,2.0)(2.0,-2.0)
        \pgfxycurve(1.5,-1.5)(2.5,-2.5)(2.5,-2.5)(4.0,-2.5)
        \pgfxyline(4.0,-2.5)(8.0,-2.5)
        \pgfputat{\pgfxy(-8.0,-0.9)}{\pgfbox[left,bottom]{\NP-H}}
        \pgfputat{\pgfxy(8.0,-1.8)}{\pgfbox[right,center]{\NP-C}}
        % Separation for PSPACE
        % (forks off from previous curve at (-2.0,2.0))
        \pgfxycurve(-2.0,2.0)(-1.0,1.0)(1.0,1.0)(2.0,0.0)
        \pgfxycurve(2.0,0.0)(3.0,-1.0)(3.0,-1.0)(4.0,-1.0)
        \pgfxyline(4.0,-1.0)(8.0,-1.0)
        \pgfstroke
        \pgfputat{\pgfxy(-8.0,3.1)}{\pgfbox[left,bottom]{\PSPACE-C}}
        %% Line style for parameterized separations
        \pgfsetlinewidth{0.3mm}
        \pgfsetdash{{2mm}{1mm}}{0mm}

        % FPT-W[1] separation
        \begin{pgftranslate}{\pgfxy(-0.1,-0.1)}
        \pgfxyline(-8.0,3.0)(-4.0,3.0)
        \pgfxycurve(-4.0,3.0)(-3.0,3.0)(-3.0,3.0)(-2.0,2.0)
        \pgfxyline(-2.0,2.0)(2.5,-2.5)
        \pgfxycurve(2.5,-2.5)(3.5,-3.5)(3.5,-3.5)(4.5,-3.5)
        \pgfxyline(4.5,-3.5)(8.0,-3.5)
        \pgfstroke
        \pgfputat{\pgfxy(-8.0,2.8)}{\pgfbox[left,top]{in \FPT}}
        \pgfputat{\pgfxy(8.0,-3.4)}{\pgfbox[right,bottom]{\W{1}-C}}
        \end{pgftranslate}

        % W[1]-W[2] separation
        \begin{pgftranslate}{\pgfxy(0.0,0.0)}
 %       \pgfxyline(-7.0,3.0)(-4.0,3.0)
        \pgfxyline(-2.0,4.0)(0.0,2.0)
        \pgfxycurve(0.0,2.0)(2.0,0.0)(2.0,2.0)(4.0,0.0)
        \pgfxycurve(4.0,0.0)(4.5,-0.5)(4.5,-0.5)(5.5,-0.5)
        \pgfxyline(5.5,-0.5)(8.0,-0.5)
        \pgfstroke
        \pgfputat{\pgfxy(8.0,-0.4)}{\pgfbox[right,bottom]{\W{2}-C}}
        \end{pgftranslate}

    \end{pgfpicture}
  \end{center}
  \caption{Complexity of \BPE\ for all combinations of
    restrictions P, U, B and S.}

  \label{fig:pubs-lattice}
\end{figure}

Bylander~\shortcite{Bylander:aij94} studied the complexity
of \strips\ under varying numbers of preconditions and
effects,
which is natural to view as a relaxation of restriction U
in \sasplus.
Table~\ref{tbl:bylander} shows such results 
(for arbitrary domain sizes $\geq 2$)
under both parameterized and standard analysis.
The parameterized results % in Figure~\ref{fig:bylander} 
are derived as follows.
For actions with an arbitrary number of effects, 
the results follow from Theorems~\ref{arb-hard} and~\ref{in-W2}.
For actions with at most one effect, 
we have two cases: 
With no preconditions
the problem is trivially in \poly. Otherwise, the results
follow from Theorems~\ref{unary-hard} and~\ref{in-W1}.

We are left with the case when the number of effects is
bounded by some constant $m_e > 1$. 
B\"{a}ckstr\"{o}m~\shortcitec{proof of Theorem~6.7}{Backstrom:PhD} 
presented a polynomial
time reduction of this class of \sasplus\ instances to the class
of instances with one effect. It is easy to verify that
his reduction is a
parameterized reduction
so we have membership in \W{1} by Theorem~\ref{in-W1}. 
When $m_p \geq 1$, then we also have \W{1}-hardness by Theorem~\ref{unary-hard}.
For the final case ($m_p=0$), we
have no corresponding parameterized hardness result. 

All non-parameterized hardness results in Table~\ref{tbl:bylander}
follow directly from 
Bylander's~\shortcitec{Fig.~1 and~2}{Bylander:aij94}
complexity results for \strips. Note that we use
results both for bounded and unbounded plan existence,
 which is
justified since the unbounded case is (trivially) polynomial-time
reducible to the bounded case.
The membership results for \PSPACE\ are immediate since
\textsc{Bounded $\SAS$ Planning} is in \PSPACE.
The membership results for \NP\ (when $m_p=0$)
follow from Bylander's~\shortcite{Bylander:aij94}
Theorem~3.9, which says that every solvable \strips\ instance 
with $m_p=0$ has a plan
of length $\leq m$ where $m$ is the number of actions. 
It is easy to verify that the same bound holds for \sasplus\
instances.

\begin{table}[tb]
  \begin{center}
    
    \begin{tabular}{|l|l|l|l|} \hline    
      & $m_e=1$   & fix $m_e > 1$ & arb. $m_e$ \\ \hline
      $m_p=0$        & in \poly   & in \W{1}  & \W{2}-C \vexpand\\ 
      & in \poly      &\NP-C             & \NP-C \\ 
      \hline
      $m_p=1$        & \W{1}-C & \W{1}-C   & \W{2}-C \vexpand\\
      & \NP-H   &  \NP-H    & \PSPACE-C \\ 
      \hline
      fix $m_p > 1$  & \W{1}-C & \W{1}-C   & \W{2}-C \vexpand\\
      & \NP-H   & \PSPACE-C & \PSPACE-C \\ 
      \hline
      arb. $m_p$     & \W{1}-C & \W{1}-C   & \W{2}-C \vexpand\\ 
      & \PSPACE-C & \PSPACE-C   & \PSPACE-C \\ \hline
    \end{tabular}
    \caption{Complexity of \BPE,
      restricting the number of preconditions ($m_p$) and 
      effects ($m_e$)}
    \label{tbl:bylander}
  \end{center}
\end{table}

Since \W{1} and \W{2} are not directly comparable
to the standard complexity classes we get interesting
separations from combining the two methods.
For instance, we can single out restriction U as making planning
easier than in the general case, which is not possible with standard
analysis.
Since restrictions B and S remain as hard as the general case even
under parameterized analysis, this shows that U is a more interesting
and important restriction than the other two.
Even more interesting is that planning is in \FPT\ under restriction P,
making it easier than the combination restriction US, 
while it seems to be rather the other way around for 
standard analysis where restriction P is only known
to be hard for \NP.
In general, we see that there are still a number of open 
problems of this type in both Figure~\ref{fig:pubs-lattice}
and Table~\ref{tbl:bylander} for the standard analysis,
while there is only one single open problem for 
the parameterized analysis: hardness for the case where
$m_p=0$ and $m_e$ is fixed.

\section{Discussion}

This work opens up several new research directions. We briefly
discuss some of them below.

Although a modification was needed to make MAR an
FPT algorithm for restriction P, no modification is necessary
if also the number of preconditions of each action is bounded by a constant $c$.
Then we can even relax P, such that for some constant $d$ there can be at
most $d$ actions with the same effect.
The proof is similar to the one for Theorem~\ref{th:fpt},
using that the total number of causal links
is bounded by $c(k+1)$ and the branching factor in line 7 is $k+1+d$.
This is an important observation since many application and example
problems in planning satisfy these constraints, for instance,
many variants of the \textsc{Logistics} domain used in
the international planning competitions.
Since planners like \textsc{Nonlin} and \textsc{Snlp} are practical
variants of MAR, this may help to explain the gap between empirical
and theoretical results for many applications.

The use of parameterized analysis in planning is by no means restricted
to using plan length as parameter.
We did so only to get results that are as comparable as possible with
the previous results.
For instance,  Downey et.~al. \shortcite{Downey:etal:dimacs99}
show that \strips\ planning can be recast as
the \textsc{Signed Digraph Pebbling} problem which is modelled
as a special type of graph.
They analyse the parameterized complexity of this problem
considering also the treewidth of the graph as a parameter.
As another example, 
Chen and Gim\'{e}nez \shortcite{Chen:Gimenez:jcss10}
show that planning is in \poly\ if the size of 
the connected components in the causal graph is bounded
by a constant, and otherwise unlikely to be in \poly.
It seems natural to study this also from a parameterized
point of view, using the component size as the parameter.
It should also be noted that the parameter need not be
a single value; it can itself be a combination of
two or more other parameters.\pagebreak[4]

There are close ties between model checking and planning
and this connection deserves further study.
For instance, model-checking traces can be viewed as plans
and vice versa~\cite{Edelkamp:etal:Dagstuhl2007-944},
and methods and results have been transferred between 
the two areas in both directions 
\cite{Edelkamp:jair03,Wehrle:Helmert:sas09,Edelkamp:etal:mochart10}.
Our reductions from planning to model-checking suggest that 
the problems are related also on a more fundamental level than
just straightforward syntactical translations.

\section*{Acknowledgements}

We would like to thank the anonymous reviewers for constructive
comments.
Chen, Ordyniak, and Szeider acknowledge the support from 
the European Research Council (ERC), project COMPLEX REASON, 239962.

%\bibliographystyle{}

%%\bibliography{aaai}

\end{document}